\newtheorem{theorem}{Theorem}
\DeclareMathOperator*{\argmax}{arg\,max}
\def\hquad{\hskip.5em\relax}
\newcommand{\BibTeX}{B\kern-.05em{\sc i\kern-.025em b}\kern-.08em\TeX}
\begin{document}


\begin{frontmatter}




\title{Explaining an Agent's Future Beliefs through Temporally Decomposing Future Reward Estimators}


\author[A]{\fnms{Mark}~\snm{Towers}\orcid{0000-0002-2609-2041}\thanks{Corresponding Author. Email: mt5g17@soton.ac.uk}}
\author[B]{\fnms{Yali}~\snm{Du}\orcid{0000-0001-5683-2621}}
\author[A]{\fnms{Christopher}~\snm{Freeman}\orcid{0000-0003-0305-9246}} 
\author[A]{\fnms{Timothy}~J.~\snm{Norman}\orcid{0000-0002-6387-4034}}

\address[A]{School of Electronics and Computer Science, University of Southampton, UK}
\address[B]{Department of Informatics, Kings College London, UK}


\begin{abstract}
Future reward estimation is a core component of reinforcement learning agents; i.e., Q-value and state-value functions, predicting an agent's sum of future rewards. Their scalar output, however, obfuscates when or what individual future rewards an agent may expect to receive. We address this by modifying an agent's future reward estimator to predict their next $N$ expected rewards, referred to as Temporal Reward Decomposition (TRD). This unlocks novel explanations of agent behaviour. Through TRD we can: estimate when an agent may expect to receive a reward, the value of the reward and the agent's confidence in receiving it; measure an input feature's temporal importance to the agent's action decisions; and predict the influence of different actions on future rewards. Furthermore, we show that DQN agents trained on Atari environments can be efficiently retrained to incorporate TRD with minimal impact on performance.
\end{abstract}

\end{frontmatter}


\section{Introduction}
\label{sec:introduction}
With reinforcement learning (RL) agents exceeding human performance in complex and challenging game environments (e.g., Atari 2600 \cite{badia2020agent57}, DotA 2 \cite{berner2019dota}, and Go \cite{silver2017mastering}), there is significant interest in applying these methods to address practical problems, often in support of human judgement. There are several barriers to realising this vision, however, with the need for agents to be able to explain their decisions one of the most important \cite{qing2022survey}; agents need to be able to work with people \cite{dulac2021challenges}, and so we need effective Explainable Reinforcement Learning (XRL) mechanisms.

Central to RL agents is a future reward estimator (Q-value or state-value function) predicting the sum of future rewards for a given state. These functions are used either explicitly in the policy itself (e.g., DQN \cite{mnih2015human}) or for learning with a critic (e.g., PPO \cite{schulman2017proximal} and TD3 \cite{fujimoto2018addressing}). However, few XRL algorithms have devised methods to explain these functions directly. One problem is that their scalar outputs provide no information on its composition (i.e., when and what future rewards the agent believes it will receive), just its expected cumulative sum. 

\begin{figure}[h]
    \centering
    \includegraphics[width=\linewidth]{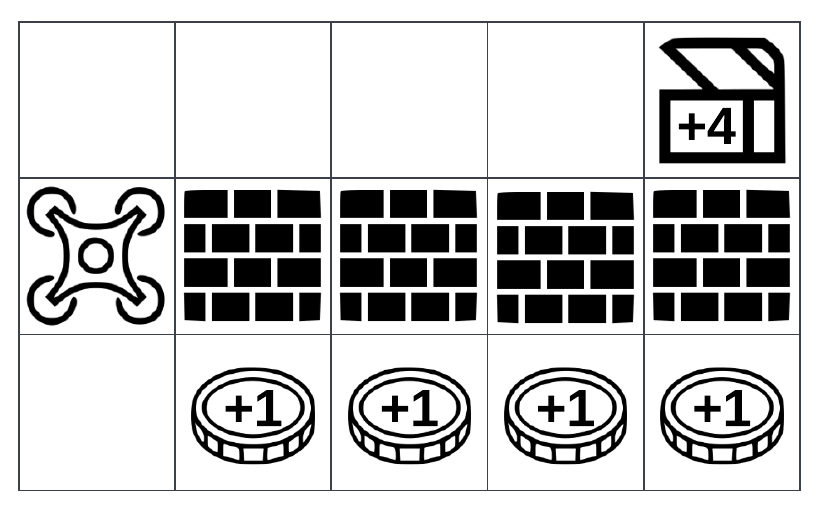}
    \caption{Example Gridworld with an agent and two paths (up and down) that contain different rewards.}
    \label{fig:gridworld}
    \vspace{1cm}
\end{figure}

An example of this problem is illustrated in Figure \ref{fig:gridworld} where a drone has two paths: up or down. Depending on the path taken, the drone can receive coins for 1 point each or the treasure chest for 4 points. Using a discount factor of $0.95$, the drone's Q-value for moving up is $3.26$ while moving down is $3.52$. Despite this small difference in Q-values, the quantity and temporal immediacy of their expected rewards are radically different; moving up, the drone receives a single large reward, while moving down receives many smaller rewards. A fact unknown from observing the Q-values alone although critical to agent behaviour in selecting whether to move up or down.

We propose a novel future reward estimator that predicts the agent's next $N$ expected rewards for a given state, referred to as Temporal Reward Decomposition (TRD) (Section \ref{sec:temporally-decomposed-q-value}). We prove that TRD is equivalent to the Q-value and state-value functions. In this way, TRD can report the temporal immediacy and quantity of future rewards for different action choices, enabling decisions to be explained and contrasted. For example, using Figure \ref{fig:gridworld}, the agent's TRD Q-value for moving down is $[0, 0.95, 0.90, 0.86, 0.81]$ and moving up $[0, 0, 0, 0, 3.26]$, enabling us to produce explanations such as ``while the sum of actual rewards is equal, taking the route down has more immediate rewards, which are preferred by the drone due to its discount factor''. 

Implementing TRD requires only two changes to a deep RL agent's future reward estimator: increase the network output by $N{+}1$ for predicting the future rewards and a novel element-wise loss function of future rewards (Section \ref{sec:temporally-decomposed-q-value}). Importantly, TRD can achieve similar performance as DQN \cite{mnih2015human} agents for Atari environments \cite{bellemare2013arcade} across a wide range of $N$ (Section \ref{sec:training-performance}).

Building on this direct access to an agent's predictions for individual future rewards, we explore three novel applications for understanding an agent's decision-making (Section \ref{sec:explainability}). The first is to generate explanations for an action choice based on when and what rewards it will receive, and, for particular environments, the agent's confidence in collecting a reward (Section \ref{subsec:extracting-reward-prob}). The second is to explain how the importance of an observation feature changes depending on how far into the future a reward is expected; e.g., identifying features that are more activate for earlier rewards (Section \ref{subsec:vis-temporal-decision-making}). Thirdly, we produce contrastive explanations using the difference in future expected rewards for two actions, which can reveal changes in expected rewards the agent will receive between them (Section \ref{subsec:contrast-exp}). Together, these three explanation mechanisms demonstrate the value of Temporal Reward Decomposition for XRL. 

Prior to presenting TRD, proving key properties and detailing how this can be used to generate explanations of an agent's action choices with respect to future rewards, we briefly review relevant prior research and present some preliminary formalisation that we build upon. 

\section{Related Work}
\label{sec:related-work}
In this brief review we focus on existing reward-based explanation mechanisms and algorithms for understanding an agent's decision-making in similar domains (see Qing \emph{et al.} \cite{qing2022survey} for a survey). 

Prior work in XRL has explored decomposing the Q-value into reward components and by future states. Juozapitis \emph{et al.}  \cite{juozapaitis2019explainable} proposed decomposing the future rewards into components. In Figure \ref{fig:gridworld}, for example, we have two components (or reward sources), the treasure chest and the coins. An explanation would then contrast the coin(s) versus treasure chest(s) along different trajectories, but not when any rewards are expected. Further work has incorporated policy summary \cite{septon2023integrating} and abstract action spaces for robotics \cite{lu2023closer} into the explanations. Alternatively, Tsuchiya \emph{et al.} \cite{tsuchiya2023explainable} proposed decomposing the Q-value into the importance of future states and Yau \emph{et al.} \cite{yau2020did} into the probabilities of state transitions. However, neither state decomposition proposal has been shown to scale to complex environments where explanations are most important for understanding agents. Importantly, all these decomposition approaches differ from our work as they require decomposing the reward estimator into components or states rather than over time, although future work could explore combining these approaches.

An alternative approach to understanding an agent's rewards is to modify the environment's reward function. Mirachandani and Karamcheti \cite{mirchandani2021ella} proposed a reward-shaping approach using natural language to convert long-horizon task descriptions to lower-level dense rewards. This allows the higher-level descriptions to be used to explain the agent policy however this relies on correctly interpreting these complex descriptors while our work requires no modification to the environment setup.  

For approaches that contribute similar applications to TRD, Madumal \emph{et al.} \cite{madumal2020explainable} proposed a text-based explanation using a hand-crafted causal model of a Starcraft 2 agent to explain why (or why not) to take an action with respect to environmental state variables from the causal model. Our explanation similarly illustrates the future reasoning of an agent, but does so in terms of rewards not changes to the environment's state. Most importantly, our approach requires neither a hand-crafted causal model nor explicitly identified environment features. 
To explain the agent's focus within an observation, Greydanus \emph{et al.} \cite{greydanus2018visualizing} proposed a perturbation-based saliency map that uses the changes in the policy output for noise applied to an area of the observation to understand a region's importance to the agent. This is limited to only visualising a region's importance for all future rewards, whereas combining TRD with saliency map algorithms can explain a particular future reward's regions of importance in decision-making.


Outside XRL, researchers have explored non-scalar variants of the Q-value, primarily for improving performance. Bellemare \emph{et al.} \cite{bellemare2017distributional} proposed C51, a training algorithm that learns the distribution of cumulative rewards rather than just the expectation, achieving state-of-the-art performance in Atari. Our work differs as we propose decompose the Q-value into the expected reward for future timesteps rather than the probability distribution over all future rewards. Furthermore, we propose new explanatory applications that are facilitated by TRD. 

\section{Preliminaries}
\label{sec:preliminary}
Before we present TRD, we provide sufficient technical detail on methods we build upon: Markov Decision Processes to mathematically describe TRD; Deep Q-learning for learning Q-value functions in complex environments; QDagger for learning with pretrained agents; and GradCAM for creating saliency map explanations.

To model a reinforcement learning environment, we use a Markov Decision Process \cite{puterman2014markov} described by the tuple $\langle S, A, R, P, T\rangle$. These variables denote the set of possible states and actions ($S$ and $A$ respectively), the reward function ($R(s, a)$) given a state action $s, a$ that is bounded to finite values, the transition probability ($P(s^\prime | s, a)$) of the next state ($s^\prime$) given the prior state-action ($s, a$) and the termination condition ($T(s)$) that returns True if the state ($s$) is a goal state. For simplicity, following Sutton and Barto \cite{sutton2018reinforcement}, we denote $S_i$, $A_i$ and $R_i$ as the state, action and reward received for timestep $i$.

Given an environment, we wish to learn a policy $\pi$ that maximises its cumulative rewards over an episode. Furthermore, to incentivise the agent to collect rewards sooner, we apply an exponential discount factor ($\gamma \in [0, 1)$). For a policy, $\pi$, we may define the expected sum of future rewards in terms of the Q-value, $q_\pi(s, a)$, or the state-value, $v_\pi(s)$, functions, Eqs.\ \eqref{eq:q-value} and \eqref{eq:state-value} respectively.

\begin{align}
    q_\pi(s, a) &= \mathbb{E}_{\pi} \Big[\sum^{\infty}_{n=0} \gamma^{n} R_{t+n} | S_t = s, A_t = a \Big] \label{eq:q-value}\\
    v_\pi(s) &= \mathbb{E}_{\pi} \Big[\sum^{\infty}_{n=0} \gamma^{n} R_{t+n} | S_t = s\Big] \label{eq:state-value}
\end{align}

To learn an optimal policy, agents can select actions that maximise the Q-value for a given state. Using this, Watkins and Dayan \cite{watkins1992q} proposed iteratively minimising the error between the predicted Q-value for a state-action and a bootstrapped target Q-value using the state-action's resultant reward plus the maximum Q-value in the next timestep (Eq.\ \eqref{eq:dqn-loss}), referred to as Q-learning. Importantly, given initial conditions and an infinite number of iterations, \cite{watkins1992q} proved Q-learning would converge to the optimal policy. 

\begin{align}
    L_{\text{Q}}(D) &= \mathbb{E}_{(s, a, R, s^\prime)\sim D} (q_{\pi}(s, a) - y_{\text{target}})^2 \label{eq:dqn-loss} \\ 
    y_{\text{target}} &= R + \gamma \max_{a^\prime \in A} \hat{q}_\pi (s^\prime, a^\prime) \label{eq:dqn-target}
\end{align}

This was extended by Mnih \emph{et al.} \cite{mnih2015human} to use neural networks, referred to as Deep Q-learning (DQN) for a general RL algorithm that achieved state-of-the-art performance in image-based environments. They combined several extensions to Q-learning including an experience replay buffer to store training examples, a target network for stability and a convolutional neural network to learn the Q-values. 

To help minimise the training time of TRD agents, we utilise QDagger \cite{agarwal2022reincarnating}, a workflow for learnt policies to reuse or transfer their knowledge to new agents. In particular, QDagger proposes two changes to an agent's training scheme: an offline training stage using a teacher's (pretrained agent) replay buffer and adding a distillation loss to the agent's (student) policy loss that minimises the KL divergence between the student's $\pi$ and teacher's $\pi_T$ policy  (Eq.\ \eqref{eq:qdagger}). The weighting of this loss term is controlled by $\lambda_T$ as the ratio of teacher to student average reward. Agarwal \emph{et al.} \cite{agarwal2022reincarnating} showed QDagger allows student agents to match the teacher's performance for Atari environments with 20x fewer observations than normal. 

\begin{equation}
    L_{\text{QDagger}}(D) = L_{Q}(D) + \lambda_T \mathbb{E}_{s \sim D} \left[ \sum_a \pi_T (a | s) \log \pi(a | s) \right]
    \label{eq:qdagger}
\end{equation}

We utilise GradCAM \cite{selvaraju2017grad}, a popular saliency map algorithm highlighting the input features that have the greatest influence on a neural network's decision-making. For a given convolutional layer, GradCAM computes the gradients from the layer's features to one of the network's outputs such that the gradient is proportional to the feature's importance in that network's decision-making for the output.

\section{Temporal Reward Decomposition}
\label{sec:temporally-decomposed-q-value}
As described in Section \ref{sec:introduction}, due to the scalar output of future reward estimators (i.e., Q-value and state-value functions), their reward composition cannot be known, preventing understanding when and what future rewards the agent expects to receive. We, therefore, propose a novel future reward estimator (Eqs.\ \eqref{eq:trd-array}), referred to as Temporal Reward Decomposition (TRD) that predicts an agent's next $N$ expected rewards. Furthermore, we prove its equivalence to scalar future reward estimators and provide a bootstrap-based loss function to learn the estimator (Eq.\ \eqref{eq:trd-loss}). For consistency, all equations in this section are for the Q-value with state-value equations in Appendix A. 


Before defining our TRD-based future reward estimators, to prove their equivalence to scalar future reward estimators (Eq.\ \eqref{eq:trd-equiv}), we first prove that the expected sum of future rewards is equivalent to the sum of expected future rewards enabling the decomposition of rewards in Eq.\ \eqref{eq:trd-array}: Theorem \ref{theorem:sum-expected-reward}.\footnote{Linearity of Expectation (LoE) is a property that any expectation can be split into its linear components, even for \emph{dependent} random variables \cite[Page 166]{stirzaker2003elementary}.}

\begin{equation}
    q^{\text{TRD}}_{\pi}(s, a) = \begin{pmatrix}
        \mathbb{E}_{\pi} [R_t | S_t = s, A_t = a] \\[3pt]
        \mathbb{E}_{\pi} [\gamma R_{t+1} | S_t = s, A_t = a] \\ 
        \vdots  \\ 
        \mathbb{E}_{\pi} [\gamma^{N-1} R_{t+N-1} | S_t = s, A_t = a] \\[3pt]
        \mathbb{E}_{\pi} \left[\sum_{i=N}^{\infty} \gamma^i R_{t+i} | S_t = s, A_t = a \right]
    \end{pmatrix}
    \label{eq:trd-array}
\end{equation}

\begin{equation}
    \sum q^{\text{TRD}}_{\pi}(s, a) \equiv q_{\pi}(s, a) \hspace{25pt} \forall s \in S, \forall a \in A
    \label{eq:trd-equiv}
\end{equation}

Using the notation in Section \ref{sec:preliminary}, we propose Eq.\ \eqref{eq:trd-array} that outputs a vector of the next $N$ expected rewards with the last element being equal to the cumulative sum of expected rewards from $N$ to $\infty$. Each element $i$ refers to the expected reward in $t+i$ timesteps with the final element being the sum of rewards beyond $N$ timesteps. Using Theorem \ref{theorem:sum-expected-reward}, Eq.\ \eqref{eq:trd-array} is provably equivalent to the scalar Q-value by summing over the array elements (Eq.\ \eqref{eq:trd-equiv}) through expanding Eq.\ \eqref{eq:expanded-sum-expectation} with $N{+}1$ expectations. Critically, this equivalence is not reversible such that given a scalar Q-value, Eq.\ \eqref{eq:trd-array} cannot be known.

\begin{theorem}
    Given a state $s$ and action $a$, the expected sum of rewards is equal to the sum of expected rewards, more precisely $\mathbb{E}_{\pi} \left[ \sum_{i=0}^{\infty} \gamma^i R_{t+i} | S_t = s, A_t = a \right] \equiv \sum_{i=0}^{\infty} \mathbb{E}_{\pi} [ \gamma^i R_{t+i} | S_t = s, A_t = a]$ for all $s \in S$ and $a \in A$.
    \label{theorem:sum-expected-reward}
\end{theorem}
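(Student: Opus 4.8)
The plan is to establish the identity by justifying the interchange of the (infinite) sum and the expectation operator, since the finite case is an immediate consequence of linearity of expectation (LoE) as noted in the footnote. Concretely, I would first rewrite the left-hand side as $\lim_{M\to\infty} \mathbb{E}_\pi\left[ \sum_{i=0}^{M} \gamma^i R_{t+i} \mid S_t = s, A_t = a \right]$, using the fact that the partial sums of $\sum \gamma^i R_{t+i}$ converge (almost surely and in expectation) because $\gamma \in [0,1)$ and the reward function $R$ is bounded to finite values, so $|R_{t+i}| \le R_{\max}$ for some constant $R_{\max}$, giving $\sum_{i=0}^\infty \gamma^i |R_{t+i}| \le R_{\max}/(1-\gamma) < \infty$.

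Next, for each finite $M$ I would apply LoE to pull the finite sum outside the expectation: $\mathbb{E}_\pi\left[ \sum_{i=0}^{M} \gamma^i R_{t+i} \mid S_t = s, A_t = a \right] = \sum_{i=0}^{M} \mathbb{E}_\pi\left[ \gamma^i R_{t+i} \mid S_t = s, A_t = a \right]$. This step is valid for dependent random variables, which matters here because the rewards along a trajectory are certainly not independent. Then I would take $M \to \infty$ on both sides: the right-hand side converges by definition to $\sum_{i=0}^\infty \mathbb{E}_\pi[\gamma^i R_{t+i} \mid S_t = s, A_t = a]$ (a convergent series, since each term is bounded in absolute value by $\gamma^i R_{\max}$), and the left-hand side converges to $\mathbb{E}_\pi\left[ \sum_{i=0}^\infty \gamma^i R_{t+i} \mid S_t = s, A_t = a \right]$ by the justification in the previous paragraph.

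The one genuine obstacle is rigorously justifying that $\lim_{M\to\infty} \mathbb{E}_\pi[\,\cdot\,]$ equals $\mathbb{E}_\pi[\lim_{M\to\infty} \,\cdot\,]$, i.e. the exchange of limit and expectation on the left-hand side. The clean way to handle this is the dominated convergence theorem: the partial sums $X_M := \sum_{i=0}^{M} \gamma^i R_{t+i}$ converge pointwise (along every trajectory) to $X_\infty := \sum_{i=0}^\infty \gamma^i R_{t+i}$, and they are uniformly dominated by the constant (hence integrable) random variable $R_{\max}/(1-\gamma)$, so $\mathbb{E}_\pi[X_M \mid \cdot] \to \mathbb{E}_\pi[X_\infty \mid \cdot]$. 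Alternatively one can invoke Fubini--Tonelli on the nonnegative series $\sum_i \gamma^i |R_{t+i}|$ to get absolute summability and then deduce the signed case. Given the paper's level of formality, I expect the proof will lean on LoE in the finite case plus a brief appeal to boundedness of rewards and $\gamma < 1$ to pass to the limit, rather than spelling out the measure-theoretic details; the main point to get right is that dependence among the $R_{t+i}$ is harmless because LoE does not require independence.
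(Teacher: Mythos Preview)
Your proposal is correct and rests on the same core device as the paper's proof --- linearity of expectation applied term by term --- but you are considerably more careful about the passage to the infinite sum. The paper's argument simply peels off $R_t$, then $\gamma R_{t+1}$, citing LoE each time, and then jumps directly from the two-term expansion to the full identity $\sum_{i=0}^\infty \mathbb{E}_\pi[\gamma^i R_{t+i}\mid\cdot]$ without any explicit limit argument, dominated convergence, or even a mention of bounded rewards and $\gamma<1$. Your use of partial sums plus DCT (or Fubini--Tonelli) is exactly what is needed to make that final step rigorous; the paper leaves it implicit. Your closing prediction about the paper's level of formality is, if anything, generous.
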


\begin{proof}
    \begin{align}
        &\mathbb{E}_{\pi} \left[ \sum_{i=0}^{\infty} \gamma^i R_{t+i} \Big| S_t = s, A_t = a \right] \\
        =& \mathbb{E}_{\pi} \left[ R_{t} + \sum_{i=1}^{\infty} \gamma^i R_{t+i} \Big| S_t = s, A_t = a \right] \\
        =& \mathbb{E}_{\pi} [R_t | S_t = s, A_t = a] \nonumber \\ 
        &\hquad + \mathbb{E}_{\pi} \left[ \sum_{i=1}^{\infty} \gamma^i R_{t+i} \Big| S_t = s, A_t = a \right ] \hspace{5pt}\text{(given LoE\footnotemark[1])} \\
        =& \mathbb{E}_{\pi} [R_t | S_t = s, A_t = a] + \mathbb{E}_{\pi} [\gamma R_{t+1} | S_t = s, A_t = a] \nonumber \\ 
            &\qquad\qquad+ \mathbb{E}_{\pi} \left[ \sum_{i=2}^{\infty} \gamma^i R_{t+i} \Big| S_t = s, A_t = a \right] \label{eq:expanded-sum-expectation} \\ 
        =& \sum_{i=0}^{\infty} \mathbb{E}_{\pi} [\gamma^i R_{t+i} | S_t = s, A_t = a] 
    \end{align}
    \label{proof:sum-expected-reward}
\end{proof}

Implementing TRD within a deep RL agent's future reward estimator requires two primary changes. The first is increasing the neural network output by $N{+}1$; i.e., the size of Eq.\ \eqref{eq:trd-array} for predicting the next $N$ future rewards. The second is the loss function (Eq.\ \eqref{eq:trd-loss}) for the network to learn Eq.\ \eqref{eq:trd-array}. Additionally, as the network now outputs a vector of future rewards rather than a scalar, for action selection and other applications, $q_\pi$ can be recovered by summing across vector elements before being applied as normal (Eq.\ \eqref{eq:trd-equiv}). Appendix B includes pseudocode for implementing the loss function, and the associated GitHub repository\footnote{\url{https://github.com/pseudo-rnd-thoughts/temporal-reward-decomposition}} contains the implementation of a TRD-modified DQN training algorithm using Gymnasium \cite{towers2024gymnasium}. 

\begin{figure*}[ht]
    \begin{equation}
        q^{\text{TRD}}_{\pi}(s, a) = \begin{pmatrix}
            \mathbb{E}_{\pi} [R_t | S_t = s, A_t = a] + 
            \dots + \mathbb{E}_{\pi} [\gamma^{w-1} R_{t+w-1} | S_t = s, A_t = a] \\[4pt]
            \mathbb{E}_{\pi} [\gamma^{w} R_{t+w} | S_t = s, A_t = a] + 
            \dots + \mathbb{E}_{\pi} [\gamma^{2w-1} R_{t+2w-1} | S_t = s, A_t = a] \\
            \vdots \\
            \sum_{i=(N-1)w}^{Nw} \mathbb{E}_{\pi} [\gamma^i R_{t+i} | S_t = s, A_t = a] \\[4pt]
            \mathbb{E}_{\pi} [\sum_{i=Nw}^{\infty} \gamma^i R_{t+i} | S_t = s, A_t = a] 
        \end{pmatrix}
        \label{eq:trd-binned-array}
    \end{equation}

    \begin{equation}
        \sum q^{\text{TRD}}_{\pi}(s, a) \equiv q_{\pi}(s, a)
        \hspace{30pt} \forall s \in S, \forall a \in A
        \label{eq:grouped-trd-equiv}
    \end{equation}
\end{figure*}

\begin{figure*}
    \begin{equation}
        L_{\text{TRD}} = \mathbb{E}_{(s_t, a_t, R_{t+i}, s_{t+w})\sim D} \begin{bmatrix}
            \left( q^{\text{TRD}_0}_\pi (s_t, a_t) - \sum_{i=0}^w R_{t+i} \right)^2 \\[4pt] 
            \left( q^{\text{TRD}_1}_\pi (s_t, a_t) - \gamma^w q^{\text{TRD}_0}_\pi (s_{t+w}, a^\prime) \right)^2 \\[4pt]
            \left( q^{\text{TRD}_2}_\pi (s_t, a_t) - \gamma^w q^{\text{TRD}_1}_\pi (s_{t+w}, a^\prime) \right)^2 \\
            \vdots \\
            \left( q^{\text{TRD}_{N}}_\pi (s_t, a_t) - \gamma^w q^{\text{TRD}_{N-1}}_\pi (s_{t+w}, a^\prime) \right)^2 \\[4pt]
            \left( q^{\text{TRD}_{N+1}}_\pi (s_t, a_t) - \gamma^w (q^{\text{TRD}_{N}}_\pi (s_{t+w}, a^\prime) + q^{\text{TRD}_{N+1}}_\pi (s_{t+w}, a^\prime)) \right)^2
        \end{bmatrix}
        \label{eq:trd-loss}
    \end{equation}
\end{figure*}

For long-horizon environments where an agent may take hundreds or thousands of actions, TRD is limited in scale as the number of predicted rewards scales linearly with the number of output neurons. We therefore propose an alternative approach to preserve the temporal distance that can be explained using a fixed number of output neurons. Rather than each vector element predicting an individual reward, Eq.\ \eqref{eq:trd-binned-array} groups rewards in each vector element; e.g., for pair grouping $[R_{t} + R_{t+1}, R_{t+2} + R_{t+3}, \dots]$. This approach, denoted $w$ for the reward grouping size, scales linearly with the number of future rewards by $w$ for a fixed $N$ such that the total number of predicted rewards is $N \cdot w$. Importantly, like Eq.\ \eqref{eq:trd-array}, Eq.\ \eqref{eq:trd-binned-array} is equivalent to the Q-value by summing across elements (Eq.\ \eqref{eq:grouped-trd-equiv}) using $N \cdot w + 1$ expansions of Eq.\ \eqref{eq:expanded-sum-expectation}. Additionally, for $w=1$, Eq.\ \eqref{eq:trd-binned-array} is equivalent to Eq.\ \eqref{eq:trd-array} and implementation only requires utilising an N-step \cite{sutton2018reinforcement} experience replay buffer to compute the sum of the first $w$ rewards and the next observation in $w$ timesteps. 

As a result, $N$ and $w$ present a trade-off between the reward vector size ($N$) and precise knowledge of each timestep's expected reward ($w$). For example using Figure \ref{fig:gridworld}, if $w=2$ and $N=2$ then the $q^{\text{TRD}}_\pi$ for moving up is $[0, 0, 3.26]$ as $[0+0, 0+0, 3.26]$ and moving down is $[0.95, 1.76, 0.81]$ as $[0+0.95, 0.90 + 0.86, 0.81]$. Furthermore, to predict, for example, the next 30 rewards, $N=30, w=1$ and $N=6, w=5$ are both valid parameters. We explore the impact of these parameters on training in Section \ref{sec:training-performance}.

\begin{figure*}[t]
    \centering
    \includegraphics[width=\linewidth]{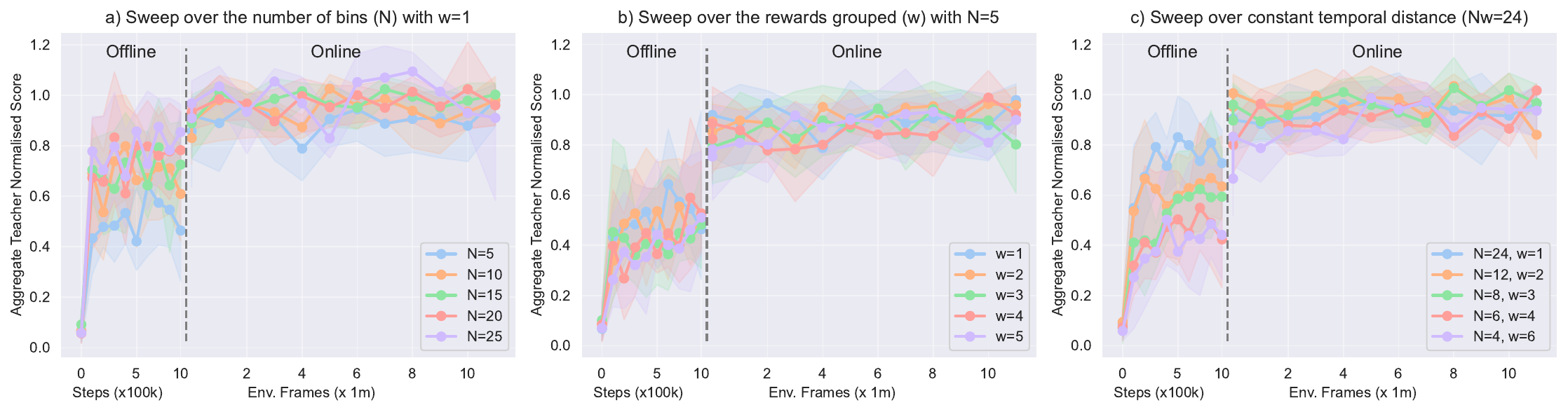}
    \caption{Interquantile mean training curves for Atari TRD-DQN agents for three environments (Breakout, Space Invaders and Ms Pacman) with three repeats, normalised by the teacher's score. Offline and Online indicate where training used the offline replay buffer and the online environment steps.}
    \label{fig:dqn-sweep-episodic-return}
    \vspace{1cm}
\end{figure*}

\begin{figure*}[t]
    \centering
    \includegraphics[width=\linewidth]{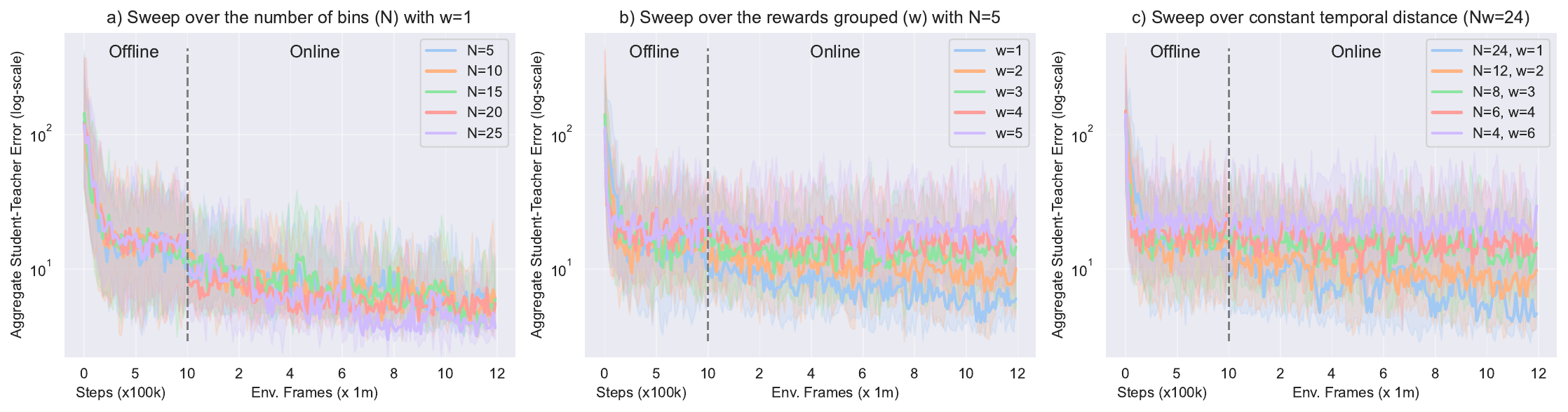}
    \caption{The Mean Squared Error between the student (TRD agent) and pretrained teacher agent averaged over three Atari environments with three repeats. Offline and Online indicate where training used the offline replay buffer and the online environment steps.}
    \label{fig:dqn-sweep-error}
    \vspace{1cm}
\end{figure*}

Through experiment, we found that converting $q^{\text{TRD}}_\pi$ to $q_\pi$ by summing over elements (Eq.\ \eqref{eq:trd-equiv}), then using the scalar loss function (Eq.\ \eqref{eq:dqn-loss}) does not converge to $q^{\text{TRD}}_\pi$. Therefore, based on the Q-learning loss function (Eq.\ \eqref{eq:dqn-loss}), we define a novel element-wise mean squared error of reward vectors (Eq.\ \eqref{eq:trd-loss}) where $a^\prime$ denotes the optimal next action ($\argmax_{a \in A} \sum q^{\text{TRD}}_{\pi}(s_{t+w}, a)$) and we use the following notation to index an element of the reward vector:
\begin{align}
    q^{\text{TRD}_0}_\pi(s, a) &= \mathbb{E}_{\pi} [R_t | S_t = s, A_t = a] \\
    q^{\text{TRD}_1}_\pi(s, a) &= \mathbb{E}_{\pi} [\gamma R_{t+1} | S_t = s, A_t = a] \\
    q^{\text{TRD}_N}_\pi (s, a) &= \mathbb{E}_{\pi} [\gamma^{N-1} R_{t+N-1} | S_t = s, A_t = a] \\
    q^{\text{TRD}_{N+1}}_\pi (s, a) &= \mathbb{E}_{\pi} \left[\sum_{i=N}^{\infty} \gamma^i R_{t+i} | S_t = s, A_t = a\right] 
\end{align}

For Eq. \eqref{eq:trd-loss}, we construct a predicted and bootstrap-based target value (cf.\ Q-learning), computing the element-wise mean squared error of the predicted and target reward vectors. The prediction is the reward vector for the action taken in state $t$, $q_\pi^{\text{TRD}}(s_t, a_t)$. For the target, the first element is the actual reward collected ($R_t$ to $R_{t+w}$) followed by the reward vector for the optimal action in $s_{t+w}$, $q_\pi^{\text{TRD}}(s_{t+w}, a^\prime)$, shifted along one position with the last two elements combined. We do this because element $i$ of the reward vector, $q^{\text{TRD}_i}_\pi (s_t, a_t)$, refers to the predicted reward in $t{+}i$ timesteps, for the next observation, $t{+}w$, the equivalent reward vector element is $i{-}1$ in the target vector, $\gamma q^{\text{TRD}_{i-1}}_\pi (s_{t+w}, a^\prime)$. 

\section{Retraining Pretrained Agents for TRD}
\label{sec:training-performance}
The goal of Temporal Reward Decomposition (TRD) is to provide information about an agent's expected future rewards over time so that we can use this information to better understand its behaviour. For this to be practically effective, TRD agents should be capable of achieving performance similar to their associated base RL agent. In this section, therefore, we evaluate the performance of DQN agents \cite{mnih2015human} that incorporate TRD for a range of Atari environments \cite{bellemare2013arcade} and assess the impact of TRD's two hyperparameters on training: reward vector size, $N$; and reward grouping, $w$.

We conduct hyperparameter sweeps across each independently, varying $N$, $w$, and $N \cdot w$, across three Atari environments (Breakout, Space Invaders and Ms. Pacman), each containing different reward functions. To account for variability in training, we repeat our hyperparameter sweeps three times. The training hyperparameters and hardware used in training, along with the agent's final scores, are presented in Appendix B. Training scripts and final neural network weights for all runs are provided in the associated GitHub repository.

Rather than training agents from scratch for these environments, we use open-sourced pretrained Atari agents \cite{Huang_Open_RL_Benchmark_2024} and the QDagger training workflow \cite{agarwal2022reincarnating}, described in Section \ref{sec:preliminary}. 

Using periodic evaluation on the same ten seeds, Figure \ref{fig:dqn-sweep-episodic-return} plots the teacher normalised interquartile mean \cite{agarwal2021deep} of the episodic reward. We find that all three hyperparameter sweeps enable the agent to approach the pretrained (teacher) agent's score with neither parameter having a significant detrimental impact. Only the offline training for a constant temporal distance ($N \cdot w = 24$) do the agents with smaller values of $w$ showcase greater initial performance, but this difference is resolved during the online training stage.  

Interestingly, for the sweep of $N$, we found no degradation in performance, which was unexpected as we believed that larger values of $N$ would require more training to reach the same performance. As a result, in Section \ref{sec:explainability}, we trained agents with $N{=}40, w{=}1$. Further work is required to understand if these performance curves hold for larger values of $N$ and for more complex environments or agents. 

To verify that our TRD loss function (Eq. \eqref{eq:trd-loss}) converges to a policy that is similar to the pretrained agent's scalar Q-value. Figure \ref{fig:dqn-sweep-error} plots the mean squared error of the Q-values for both pretrained DQN agents and TRD agents during training. We find all parameters get close to the pretrained agent's Q-value with $w{=}1$ being an important factor.

Regarding the computation impact of incorporating TRD, we found that our QDagger+TRD DQN agents took $\approx10\%$ fewer steps per second than our base DQN agents, $248$ to $274$ steps per second, respectively. This performance will be jointly caused by QDagger requiring an additional forward pass from the teacher agent and TRD using a larger network output and a more complex loss function. 

\section{Explaining an Agent's Future Beliefs and Decision-Making}
\label{sec:explainability}
We now present three novel explanation mechanisms using future expected rewards: understanding what rewards the agent expects to receive and when, and their confidence in this prediction; visualising an observation feature's importance for predicting rewards at near and far timesteps; and a contrastive explanation using the difference in future rewards to understand the impact of different actions choices (Sections \ref{subsec:extracting-reward-prob}, \ref{subsec:vis-temporal-decision-making}, and \ref{subsec:contrast-exp} respectively). We showcase these applications using three different Atari environments with more examples in Appendices C, D, and E. All agents were retrained DQN agents incorporating TRD using $N{=}40$ and $w{=}1$. 

\begin{figure*}[ht]
    \centering
    \includegraphics[width=0.85\linewidth]{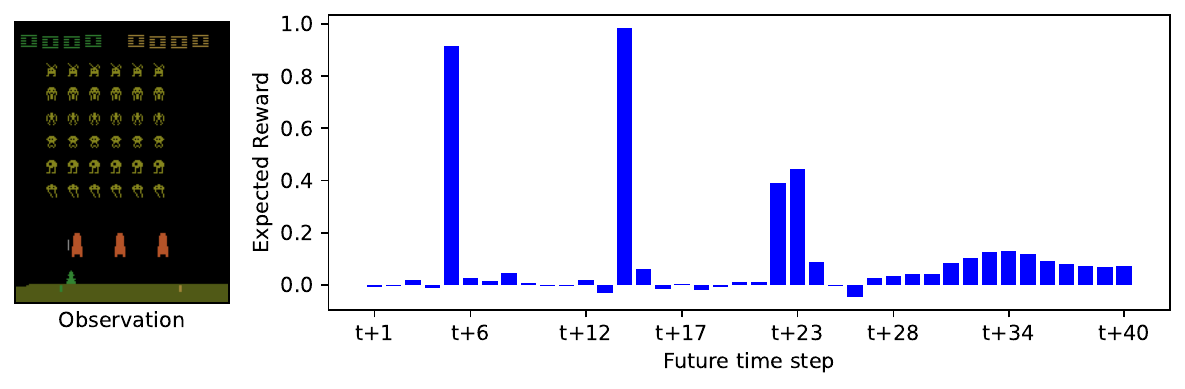}
    \caption{A Space Invaders observation (left) with the respective predicted next 40 future expected rewards (right).}
    \label{fig:spaceinvaders-expected-reward}
    \vspace{1cm}
\end{figure*}

\begin{figure*}[ht]
    \centering
    \includegraphics[width=0.85\linewidth]{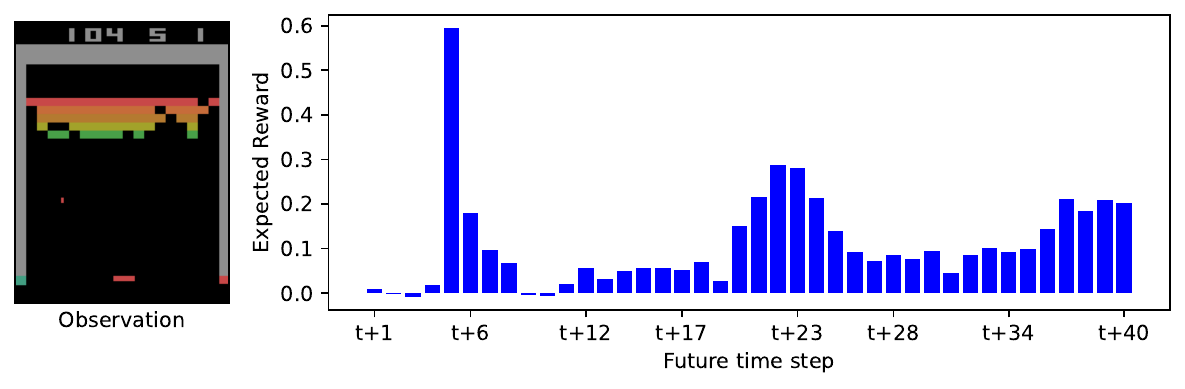}
    \caption{A Breakout observation (left) with the respective predicted next 40 future expected rewards (right).}
    \label{fig:breakout-expected-reward}
    \vspace{1cm}
\end{figure*}

\begin{figure*}[ht]
    \centering
    \includegraphics[width=.95\linewidth]{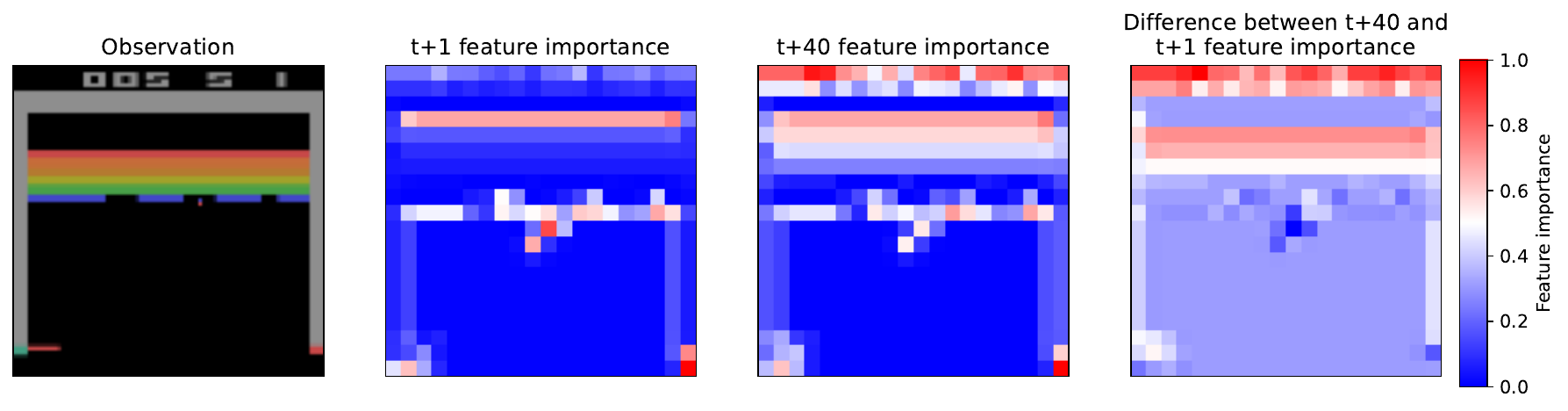}
    \caption{GradCAM saliency maps for the $t+1$ and $t+40$ expected reward along with their difference for a Breakout observation. GradCAM uses the first convolutional layer of the agent's neural network to differentiate.}
    \label{fig:breakout-feature-importance}
    \vspace{1cm}
\end{figure*}

\begin{figure*}
    \centering
    \includegraphics[width=.95\linewidth]{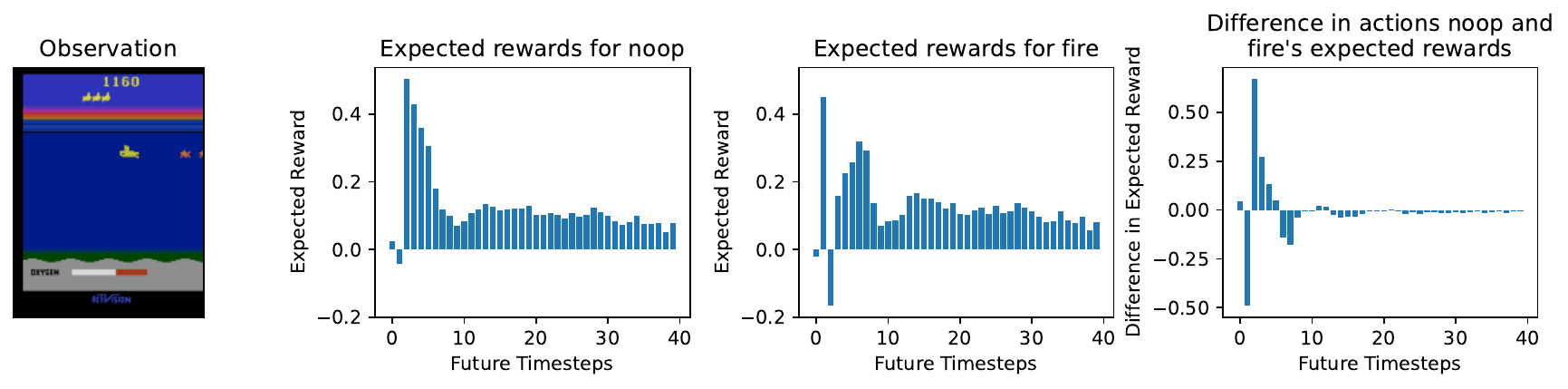}
    \caption{The difference of each future expected reward for taking Left and Right actions of the observation for the Atari Seaquest environment.}
    \label{fig:action-expected-reward-diff}
    \vspace{1cm}
\end{figure*}

\subsection{What Rewards to Expect and When?}
\label{subsec:extracting-reward-prob}
For environments with complex transition dynamics or reward functions such as Atari, understanding how an agent maximises its future rewards or predicting what rewards it will receive and when is not possible, unlike with the toy example illustrated in Figure \ref{fig:gridworld}. We show here how a TRD agent's predicted future rewards supply this information, presenting an important explanatory perspective for understanding agent decisions. 
Furthermore, for environments with binary reward functions (i.e., where the rewards are either zero or a constant value) the agent's expected reward can be further decomposed into the probability of the reward function components. Atari uses integer rewards and DQN agents clip rewards to -1 to 1, and so for these examples the agent's probability of collecting a reward is equivalent to the reward's expectation.

Figures \ref{fig:spaceinvaders-expected-reward} and \ref{fig:breakout-expected-reward} plot the agent's expected rewards over the next 40 timesteps for the observation on the left. As $w{=}1$, the discount factor is constant for each predicted timestep, and so we factor it out, leaving just the expected reward. Without domain knowledge of each environment and its reward functions, we can observe from the expected rewards plots that the agent expects periodic non-zero rewards every 8 to 9 timesteps for Space invaders and every 15 timesteps for Breakout. Additionally, considering that the expected rewards (for these environments) are equivalent to the agent's confidence (probability) in receiving a reward for a particular timestep, users can infer that the agent's confidence reduces over time for the specific timestep that the agent will receive a reward. As such, for space invaders, the agent has high confidence for the close timesteps ($t{+}6$ and $t{+}15$) with the expected rewards for the third and fourth rewards being distributed across several timesteps ($t{+}23$ to $t{+}24$ and $t{+}30$ to $t{+}40$).

Further, utilising domain knowledge of each environment, Figures \ref{fig:spaceinvaders-expected-reward} and \ref{fig:breakout-expected-reward} correlate with our understanding as agents can only shoot aliens or break bricks periodically. Additionally, as the policy is stochastic due to epsilon-greedy action selection and with randomness in the environment, the uncertainty of timesteps far in the future is unsurprising and matches with human expectations. 

Building on the two figures, we can generate videos of the agent's expected rewards across entire episodes plotting the expected reward for each observation. Example videos are provided in the associated GitHub repository and contain significant additional context for users to visualise how the agent's predicted future rewards change over time as the environment's state evolves. 

As a result, we anticipate that TRD has the potential to aid researchers and developers debug RL agents; Figure \ref{fig:spaceinvaders-expected-reward} and the related videos provide novel information about an agent's future beliefs and its understanding of an environment's reward function. 

\subsection{What Observation Features are Important?}
\label{subsec:vis-temporal-decision-making}
Understanding the areas of an input that have the greatest impact on a neural network is a popular technique for generating explanations, called saliency maps. These allow users to visualise what features of an observation most influence an agent's decision. With access to an agent's beliefs about its future expected rewards, TRD provides novel saliency map opportunities to understand how the agent's focus with respect to an observation varies. 

Utilising GradCAM \cite{selvaraju2017grad} (a popular saliency map algorithm described in Section \ref{sec:preliminary}), we can select individual expected rewards as the output to discover its feature importance. Figure \ref{fig:breakout-feature-importance} plots an Atari Breakout observation and the normalised feature importance for the expected reward of the next timestep ($t{+}1$) and the most distance expected reward ($t{+}40$) along with their normalised difference. The feature importance plots highlight areas of focus (red), influencing its decision and ignored areas (blue). We find that the agent's focus on the ball and bricks vary depending on how far in the future a reward is predicted. For the $t{+}1$ feature importance, the agent is highly focused (shown in red) on the ball in the centre. In comparison, for $t{+}40$, the agent has a greater focus on the bricks than the ball. Using domain knowledge of the environment validates human expectations as the number of bricks left and their position will have greater long-term importance to the agent than the ball. This difference is highlighted when subtracting the feature importance of $t{+}1$ from $t{+}40$ such that the ball's importance is significantly lower (shown in blue) and the bricks have relatively greater importance (shown in red). 

To help visualise this change in an observation feature's importance across each predicted future reward, we provide a video of Figure \ref{fig:breakout-feature-importance} within the associated GitHub repository. Additionally, we provide a video of an episode plotting the first and last predicted reward's feature importance for each timestep. Like visualising an agent's expected reward in Section \ref{subsec:extracting-reward-prob}, Figure \ref{fig:breakout-feature-importance} and videos can help researchers and developers understand in what context a feature has importance for an agent. Previously, it was only possible to understand a feature's importance to predict the agent's total reward, whereas TRD provides us with the ability to investigate the importance of features in a more granular way. 

\subsection{What is the Impact of an Action Choice?}
\label{subsec:contrast-exp}
Within an environment, there are often multiple (possibly similar) paths to complete a goal with humans interested in understanding the differences between them (e.g., Figure \ref{fig:gridworld}). Contrastive explanations are a popular approach to understanding the reasons for taking one decision over another. In our case, this is the choice between two alternative actions in some state \cite{miller2021contrastive}. With the future expected rewards, TRD provides additional information to compare and contrast states and actions using what rewards the agent expects to receive and when along different paths. In this section, we show how simple explanations only using the timestep-wise difference in expected rewards can help understand an action's impact on an agent's future rewards.

Figure \ref{fig:action-expected-reward-diff} shows the expected reward for taking no action (noop) and firing and the differences between the expected reward for noop and firing in the Atari Seaquest environment. The right-hand side figure shows that the difference in future rewards produces a positive and negative spike after which the expected rewards converge. We can infer from these spikes that if the agent fires rather than noop then there is a more immediate reward, whereas if the agent waits, taking no action, the reward is delayed resulting in a later spike. Crucially, this difference in reward outcomes is resolved afterwards causing no long-term difference in the agent's expected rewards. Using domain knowledge, we can assume that this means if the agent doesn't fire in this timestep, it will most likely fire in the following timestep or soon after, thus receiving a slightly delayed reward.

Collectively, with the explanations from Sections \ref{subsec:extracting-reward-prob} and \ref{subsec:vis-temporal-decision-making}, contrastive explanations highlight the consequences of different actions on an agent's future rewards.

\section{Conclusion}
\label{sec:conclusion}
Temporal Reward Decomposition (TRD) is a novel reward estimator that is equivalent to scalar future reward estimators that can uniquely reveal additional information about deep reinforcement learning agents. We have shown that pretrained Atari agents can be efficiently retrained to incorporate TRD with minimal impact on performance. Furthermore, we have showcased three novel explanatory mechanisms enabled by TRD, demonstrating how these can aid researchers and developers understanding agent behaviour in complex environments such as the three Atari environments considered here. We can ask ``What rewards to expect and when?'' by predicting rewards numerous timesteps into the future and the confidence the agent has in their prediction (Section \ref{subsec:extracting-reward-prob}). We can ask ``What observation features are important?'' by revealing how an agent's focus changes depending on the immediacy of the reward predicted (Section \ref{subsec:vis-temporal-decision-making}). Lastly, we can as ``What is the impact of an action choice?'' by revealing the difference in future expected rewards for two alternative actions (Section \ref{subsec:contrast-exp}). 

TRD can be extended in various ways to better explain an agent's future rewards. Incorporating prior decomposition approaches such as Juozapaitis \emph{et al.} \cite{juozapaitis2019explainable} to explain the future expected rewards of different reward components is a clear option for future research. Further, each reward could be modelled as a probability distribution, decomposing the expectation of a reward for a timestep \cite{bellemare2017distributional}. A further avenue for future research is that the linear relationship between future reward estimators and TRD (Eq.\ \eqref{eq:trd-equiv}) may be exploited for more efficient training.  




\begin{ack}
This work was supported by the UKRI Centre for Doctoral Training in Machine Intelligence for Nano-electronic Devices and Systems [EP/S024298/1] and RBC Wealth Management.

Thanks to John Birbeck for advice on the proof of Theorem \ref{theorem:sum-expected-reward}. 
\end{ack}



\bibliography{m2088}

\begin{thebibliography}{28}
\providecommand{\natexlab}[1]{#1}
\providecommand{\url}[1]{\texttt{#1}}
\expandafter\ifx\csname urlstyle\endcsname\relax
  \providecommand{\doi}[1]{doi: #1}\else
  \providecommand{\doi}{doi: \begingroup \urlstyle{rm}\Url}\fi

\bibitem[Agarwal et~al.(2021)Agarwal, Schwarzer, Castro, Courville, and Bellemare]{agarwal2021deep}
R.~Agarwal, M.~Schwarzer, P.~S. Castro, A.~Courville, and M.~G. Bellemare.
\newblock Deep reinforcement learning at the edge of the statistical precipice.
\newblock \emph{Advances in Neural Information Processing Systems}, 2021.

\bibitem[Agarwal et~al.(2022)Agarwal, Schwarzer, Castro, Courville, and Bellemare]{agarwal2022reincarnating}
R.~Agarwal, M.~Schwarzer, P.~S. Castro, A.~C. Courville, and M.~Bellemare.
\newblock Reincarnating reinforcement learning: Reusing prior computation to accelerate progress.
\newblock \emph{Advances in Neural Information Processing Systems}, 35:\penalty0 28955--28971, 2022.

\bibitem[Badia et~al.(2020)Badia, Piot, Kapturowski, Sprechmann, Vitvitskyi, Guo, and Blundell]{badia2020agent57}
A.~P. Badia, B.~Piot, S.~Kapturowski, P.~Sprechmann, A.~Vitvitskyi, Z.~D. Guo, and C.~Blundell.
\newblock Agent57: Outperforming the atari human benchmark.
\newblock In \emph{International Conference on Machine Learning}, pages 507--517. PMLR, 2020.

\bibitem[Bellemare et~al.(2013)Bellemare, Naddaf, Veness, and Bowling]{bellemare2013arcade}
M.~G. Bellemare, Y.~Naddaf, J.~Veness, and M.~Bowling.
\newblock The arcade learning environment: An evaluation platform for general agents.
\newblock \emph{Journal of Artificial Intelligence Research}, 47:\penalty0 253--279, 2013.

\bibitem[Bellemare et~al.(2017)Bellemare, Dabney, and Munos]{bellemare2017distributional}
M.~G. Bellemare, W.~Dabney, and R.~Munos.
\newblock A distributional perspective on reinforcement learning.
\newblock In \emph{International Conference on Machine Learning}, pages 449--458. PMLR, 2017.

\bibitem[Berner et~al.(2019)Berner, Brockman, Chan, Cheung, Debiak, Dennison, Farhi, Fischer, Hashme, Hesse, et~al.]{berner2019dota}
C.~Berner, G.~Brockman, B.~Chan, V.~Cheung, P.~Debiak, C.~Dennison, D.~Farhi, Q.~Fischer, S.~Hashme, C.~Hesse, et~al.
\newblock Dota 2 with large scale deep reinforcement learning.
\newblock \emph{arXiv preprint arXiv:1912.06680}, 2019.

\bibitem[Dulac-Arnold et~al.(2021)Dulac-Arnold, Levine, Mankowitz, Li, Paduraru, Gowal, and Hester]{dulac2021challenges}
G.~Dulac-Arnold, N.~Levine, D.~J. Mankowitz, J.~Li, C.~Paduraru, S.~Gowal, and T.~Hester.
\newblock Challenges of real-world reinforcement learning: Definitions, benchmarks and analysis.
\newblock \emph{Machine Learning}, 110\penalty0 (9):\penalty0 2419--2468, 2021.

\bibitem[Fujimoto et~al.(2018)Fujimoto, Hoof, and Meger]{fujimoto2018addressing}
S.~Fujimoto, H.~Hoof, and D.~Meger.
\newblock Addressing function approximation error in actor-critic methods.
\newblock In \emph{International Conference on Machine Learning}, pages 1587--1596. PMLR, 2018.

\bibitem[Greydanus et~al.(2018)Greydanus, Koul, Dodge, and Fern]{greydanus2018visualizing}
S.~Greydanus, A.~Koul, J.~Dodge, and A.~Fern.
\newblock Visualizing and understanding atari agents.
\newblock In \emph{International conference on machine learning}, pages 1792--1801. PMLR, 2018.

\bibitem[Huang et~al.(2024)Huang, Gallouédec, Felten, Raffin, Dossa, Zhao, Sullivan, Makoviychuk, Makoviichuk, Danesh, Roumégous, Weng, Chen, Rahman, M.~Araújo, Quan, Tan, Klein, Charakorn, Towers, Berthelot, Mehta, Chakraborty, KG, Charraut, Ye, Liu, Alegre, Nikulin, Hu, Liu, Choi, and Yi]{Huang_Open_RL_Benchmark_2024}
S.~Huang, Q.~Gallouédec, F.~Felten, A.~Raffin, R.~F.~J. Dossa, Y.~Zhao, R.~Sullivan, V.~Makoviychuk, D.~Makoviichuk, M.~H. Danesh, C.~Roumégous, J.~Weng, C.~Chen, M.~M. Rahman, J.~G. M.~Araújo, G.~Quan, D.~Tan, T.~Klein, R.~Charakorn, M.~Towers, Y.~Berthelot, K.~Mehta, D.~Chakraborty, A.~KG, V.~Charraut, C.~Ye, Z.~Liu, L.~N. Alegre, A.~Nikulin, X.~Hu, T.~Liu, J.~Choi, and B.~Yi.
\newblock {Open RL Benchmark: Comprehensive Tracked Experiments for Reinforcement Learning}.
\newblock \emph{arXiv preprint arXiv:2402.03046}, 2024.
\newblock URL \url{https://arxiv.org/abs/2402.03046}.

\bibitem[Juozapaitis et~al.(2019)Juozapaitis, Koul, Fern, Erwig, and Doshi-Velez]{juozapaitis2019explainable}
Z.~Juozapaitis, A.~Koul, A.~Fern, M.~Erwig, and F.~Doshi-Velez.
\newblock Explainable reinforcement learning via reward decomposition.
\newblock In \emph{IJCAI/ECAI Workshop on Explainable Artificial Intelligence}, 2019.

\bibitem[Lu et~al.(2023)Lu, Zhao, Magg, Gromniak, Li, and Wermterl]{lu2023closer}
W.~Lu, X.~Zhao, S.~Magg, M.~Gromniak, M.~Li, and S.~Wermterl.
\newblock A closer look at reward decomposition for high-level robotic explanations.
\newblock In \emph{2023 IEEE International Conference on Development and Learning (ICDL)}, pages 429--436. IEEE, 2023.

\bibitem[Madumal et~al.(2020)Madumal, Miller, Sonenberg, and Vetere]{madumal2020explainable}
P.~Madumal, T.~Miller, L.~Sonenberg, and F.~Vetere.
\newblock Explainable reinforcement learning through a causal lens.
\newblock In \emph{Proceedings of the AAAI conference on Artificial Intelligence}, volume~34, pages 2493--2500, 2020.

\bibitem[Miller(2021)]{miller2021contrastive}
T.~Miller.
\newblock Contrastive explanation: A structural-model approach.
\newblock \emph{The Knowledge Engineering Review}, 36:\penalty0 e14, 2021.

\bibitem[Mirchandani et~al.(2021)Mirchandani, Karamcheti, and Sadigh]{mirchandani2021ella}
S.~Mirchandani, S.~Karamcheti, and D.~Sadigh.
\newblock Ella: Exploration through learned language abstraction.
\newblock \emph{Advances in neural information processing systems}, 34:\penalty0 29529--29540, 2021.

\bibitem[Mnih et~al.(2015)Mnih, Kavukcuoglu, Silver, Rusu, Veness, Bellemare, Graves, Riedmiller, Fidjeland, Ostrovski, et~al.]{mnih2015human}
V.~Mnih, K.~Kavukcuoglu, D.~Silver, A.~A. Rusu, J.~Veness, M.~G. Bellemare, A.~Graves, M.~Riedmiller, A.~K. Fidjeland, G.~Ostrovski, et~al.
\newblock Human-level control through deep reinforcement learning.
\newblock \emph{Nature}, 518\penalty0 (7540):\penalty0 529--533, 2015.

\bibitem[Puterman(2014)]{puterman2014markov}
M.~L. Puterman.
\newblock \emph{Markov Decision Processes: Discrete Stochastic Dynamic Programming}.
\newblock John Wiley \& Sons, 2014.

\bibitem[Qing et~al.(2022)Qing, Liu, Song, and Song]{qing2022survey}
Y.~Qing, S.~Liu, J.~Song, and M.~Song.
\newblock A survey on explainable reinforcement learning: Concepts, algorithms, challenges.
\newblock \emph{arXiv preprint arXiv:2211.06665}, 2022.

\bibitem[Schulman et~al.(2017)Schulman, Wolski, Dhariwal, Radford, and Klimov]{schulman2017proximal}
J.~Schulman, F.~Wolski, P.~Dhariwal, A.~Radford, and O.~Klimov.
\newblock Proximal policy optimization algorithms.
\newblock \emph{arXiv preprint arXiv:1707.06347}, 2017.

\bibitem[Selvaraju et~al.(2017)Selvaraju, Cogswell, Das, Vedantam, Parikh, and Batra]{selvaraju2017grad}
R.~R. Selvaraju, M.~Cogswell, A.~Das, R.~Vedantam, D.~Parikh, and D.~Batra.
\newblock Grad-cam: Visual explanations from deep networks via gradient-based localization.
\newblock In \emph{Proceedings of the IEEE International Conference on Computer Vision}, pages 618--626, 2017.

\bibitem[Septon et~al.(2023)Septon, Huber, Andr{\'e}, and Amir]{septon2023integrating}
Y.~Septon, T.~Huber, E.~Andr{\'e}, and O.~Amir.
\newblock Integrating policy summaries with reward decomposition for explaining reinforcement learning agents.
\newblock In \emph{International Conference on Practical Applications of Agents and Multi-Agent Systems}, pages 320--332. Springer, 2023.

\bibitem[Silver et~al.(2017)Silver, Schrittwieser, Simonyan, Antonoglou, Huang, Guez, Hubert, Baker, Lai, Bolton, et~al.]{silver2017mastering}
D.~Silver, J.~Schrittwieser, K.~Simonyan, I.~Antonoglou, A.~Huang, A.~Guez, T.~Hubert, L.~Baker, M.~Lai, A.~Bolton, et~al.
\newblock Mastering the game of go without human knowledge.
\newblock \emph{Nature}, 550\penalty0 (7676):\penalty0 354--359, 2017.

\bibitem[Stirzaker(2003)]{stirzaker2003elementary}
D.~Stirzaker.
\newblock \emph{Elementary Probability}.
\newblock Cambridge University Press, 2003.

\bibitem[Sutton and Barto(2018)]{sutton2018reinforcement}
R.~S. Sutton and A.~G. Barto.
\newblock \emph{Reinforcement Learning: An Introduction}.
\newblock MIT press, 2018.

\bibitem[Towers et~al.(2024)Towers, Kwiatkowski, Terry, Balis, De~Cola, Deleu, Goul{\~a}o, Kallinteris, Krimmel, KG, et~al.]{towers2024gymnasium}
M.~Towers, A.~Kwiatkowski, J.~Terry, J.~U. Balis, G.~De~Cola, T.~Deleu, M.~Goul{\~a}o, A.~Kallinteris, M.~Krimmel, A.~KG, et~al.
\newblock Gymnasium: A standard interface for reinforcement learning environments.
\newblock \emph{arXiv preprint arXiv:2407.17032}, 2024.

\bibitem[Tsuchiya et~al.(2023)Tsuchiya, Mori, and Egi]{tsuchiya2023explainable}
Y.~Tsuchiya, Y.~Mori, and M.~Egi.
\newblock Explainable reinforcement learning based on q-value decomposition by expected state transitions.
\newblock \emph{CEUR Workshop Proceedings}, 2023.

\bibitem[Watkins and Dayan(1992)]{watkins1992q}
C.~J. Watkins and P.~Dayan.
\newblock Q-learning.
\newblock \emph{Machine Learning}, 8:\penalty0 279--292, 1992.

\bibitem[Yau et~al.(2020)Yau, Russell, and Hadfield]{yau2020did}
H.~Yau, C.~Russell, and S.~Hadfield.
\newblock What did you think would happen? explaining agent behaviour through intended outcomes.
\newblock \emph{Advances in Neural Information Processing Systems}, 33:\penalty0 18375--18386, 2020.

\end{thebibliography}

\newpage
\appendix

\section{State-value based Temporal Reward Decomposition}
Section 4 outlines the Temporal Reward Decomposition for the Q-value only for consistency. Therefore, in this Appendix, we provide the equivalent Temporal Reward Decomposition for the state-value with theorem \ref{theorem:state-value-sum-expected-reward} providing the equivalent given only a state $s$, Eqs. \eqref{eq:state-value-trd-array} and \eqref{eq:state-value-trd-binned-array} for the state-value based reward vector and Eq. \eqref{eq:state-value-trd-loss} for the loss function. 

\begin{theorem}
    Given a state $s$, the expected sum of rewards is equal to the sum of expected rewards, more precisely $\mathbb{E}_{\pi} \left[ \sum_{i=0}^{\infty} \gamma^i R_{t+i} \right | S_t = s] \equiv \sum_{i=0}^{\infty} \mathbb{E}_{\pi} [ \gamma^i R_{t+i} | S_t = s]$.
    \label{theorem:state-value-sum-expected-reward}
\end{theorem}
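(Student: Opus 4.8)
The plan is to mirror the proof of Theorem~\ref{theorem:sum-expected-reward} almost verbatim, simply dropping the conditioning on $A_t = a$ throughout; none of the manipulations there relied on the identity of the conditioning event, only on the fact that expectation is linear. Concretely, I would first write $\sum_{i=0}^{\infty} \gamma^i R_{t+i} = R_t + \sum_{i=1}^{\infty} \gamma^i R_{t+i}$, apply linearity of expectation to peel off the $R_t$ term, obtaining $\mathbb{E}_{\pi}[R_t \mid S_t = s] + \mathbb{E}_{\pi}[\sum_{i=1}^{\infty} \gamma^i R_{t+i} \mid S_t = s]$, and then repeat this peeling step. Formally this is an induction on $k$: after $k$ applications one has $\sum_{i=0}^{k-1} \mathbb{E}_{\pi}[\gamma^i R_{t+i} \mid S_t = s] + \mathbb{E}_{\pi}[\sum_{i=k}^{\infty} \gamma^i R_{t+i} \mid S_t = s]$, and letting $k \to \infty$ is intended to yield the claim.

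The one point that deserves care — and the only real obstacle — is the passage $k \to \infty$, i.e.\ showing that the tail term $\mathbb{E}_{\pi}[\sum_{i=k}^{\infty} \gamma^i R_{t+i} \mid S_t = s]$ vanishes and that the interchange of the countably infinite sum with the expectation is legitimate. Linearity of expectation as quoted in the footnote licenses only finite splits, so for the infinite sum I would invoke dominated (or monotone) convergence. The dominating bound is immediate from the standing assumption that $R$ is bounded: if $|R(s,a)| \le R_{\max}$ for all $s,a$, then $\bigl| \sum_{i=k}^{\infty} \gamma^i R_{t+i} \bigr| \le \sum_{i=k}^{\infty} \gamma^i R_{\max} = \gamma^k R_{\max}/(1-\gamma)$, which is summable over the whole series (as $\gamma \in [0,1)$) and tends to $0$ as $k \to \infty$. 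Hence the partial sums converge absolutely, uniformly in the sample point, dominated convergence applies, the tail expectation disappears in the limit, and the reordering of summation and expectation is valid.

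Combining these steps gives $\mathbb{E}_{\pi}[\sum_{i=0}^{\infty} \gamma^i R_{t+i} \mid S_t = s] = \sum_{i=0}^{\infty} \mathbb{E}_{\pi}[\gamma^i R_{t+i} \mid S_t = s]$, and since $s$ was arbitrary the identity holds for all $s \in S$. I would present the argument in the same display-aligned style as the proof of Theorem~\ref{theorem:sum-expected-reward}, exhibiting the first two peeling steps explicitly and then citing the $\gamma^k R_{\max}/(1-\gamma)$ bound to collapse the remaining tail, so that the state-value proof reads as a transparent parallel of the Q-value one.
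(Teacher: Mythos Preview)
Your proposal is correct and follows essentially the same peeling-via-linearity approach as the paper's own proof, which simply drops the $A_t=a$ conditioning from Theorem~\ref{theorem:sum-expected-reward} and displays two peeling steps before jumping to the infinite sum. The only difference is that you supply the dominated-convergence/tail-bound justification for the passage $k\to\infty$, a point the paper glosses over; your argument is therefore the same in spirit but strictly more rigorous.
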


\begin{proof}
    \begin{align}
        &\mathbb{E}_{\pi} \left[ \sum_{i=0}^{\infty} \gamma^i R_{t+i} \Big| S_t = s\right] \\
        =& \mathbb{E}_{\pi} \left[ R_{t} + \sum_{i=1}^{\infty} \gamma^i R_{t+i} \Big| S_t = s \right] \\
        =& \mathbb{E}_{\pi} [R_t | S_t = s] + \mathbb{E}_{\pi} \left[ \sum_{i=1}^{\infty} \gamma^i R_{t+i} \Big| S_t = s \right ] \\
        =& \mathbb{E}_{\pi} [R_t | S_t = s] + \mathbb{E}_{\pi} [\gamma R_{t+1} | S_t = s] + \nonumber \\ 
        & \qquad\qquad \mathbb{E}_{\pi} \left[ \sum_{i=2}^{\infty} \gamma^i R_{t+i} \Big| S_t = s \right] \\ 
         =& \sum_{i=0}^{\infty} \mathbb{E}_{\pi} [\gamma^i R_{t+i} | S_t = s] 
    \end{align}
\end{proof}

\begin{equation}
    v^{\text{TRD}}_{\pi}(s) = \begin{pmatrix}
        \mathbb{E}_{\pi} [R_t | S_t = s] \\[3pt]
        \mathbb{E}_{\pi} [\gamma R_{t+1} | S_t = s] \\ 
        \vdots  \\ 
        \mathbb{E}_{\pi} [\gamma^{N-1} R_{t+N-1} | S_t = s] \\[3pt]
        \mathbb{E}_{\pi} \left[\sum_{i=N}^{\infty} \gamma^i R_{t+i} | S_t = s \right]
    \end{pmatrix}
    \label{eq:state-value-trd-array}
\end{equation}

\section{Training}
For training, in this Appendix, we provide the training hardware used, agent performance, an example implementation of the TRD-based loss function (Listing \ref{listing:trd-training}), and a Table of training hyperparameters (Table \ref{tab:hyperparameters}).

For the hardware used for each training script, we used a single Nvidia V100 and Intel Xeon Gold 6138 with 20 cores, resulting in a wall time of around 7 hours. 

As we conduct a hyperparameter sweep, we report the average final evaluation results for the training environments using $N=25, w=1$ (and a seed of 0). The scores are $399.0, 1444.0$, and $2756.0$ for Breakout, Space Invaders, and Ms Pacman, respectively (the teacher-based DQN agents have scores of $386.9, 1324.0$, and $2433.0$) with clipped rewards.  

\begin{figure}[ht]
    \begin{minted}[breaklines]{python}
import numpy as np

def trd_loss(obs, actions, next_obs, 
             rewards, terminations):
    next_q_values = model(next_obs)
    next_actions = np.argmax(np.sum(next_q_values))
    next_q_values = next_q_values[next_actions]
    
    q_targets = (1 - terminations) * discount_factor * next_q_values
    q_targets = np.roll(q_targets, shift=1)
    q_targets[-1] += q_targets[0]
    q_targets[0] = rewards

    q_values = model(obs)
    q_actions = q_values[actions]

    loss = np.mean(np.square((q_actions - q_targets)))
    return loss
    \end{minted}
    \caption{Example implementation of the TRD Q-value loss function}
    \label{listing:trd-training}
    \vspace{1cm}
\end{figure}

\begin{figure*}[ht]
    \begin{equation}
        v^{\text{TRD}}_{\pi}(s) = \begin{pmatrix}
            \mathbb{E}_{\pi} [R_t | S_t = s] + 
            \dots + \mathbb{E}_{\pi} [\gamma^{w-1} R_{t+w-1} | S_t = s] \\[4pt]
            \mathbb{E}_{\pi} [\gamma^{w} R_{t+w} | S_t = s] + 
            \dots + \mathbb{E}_{\pi} [\gamma^{2w-1} R_{t+2w-1} | S_t = s] \\
            \vdots \\
            \sum_{i=(N-1)w}^{Nw} \mathbb{E}_{\pi} [\gamma^i R_{t+i} | S_t = s] \\[4pt]
            \mathbb{E}_{\pi} [\sum_{i=Nw}^{\infty} \gamma^i R_{t+i} | S_t = s] 
        \end{pmatrix}
        \label{eq:state-value-trd-binned-array}
    \end{equation}

    \begin{equation}
        \sum v^{\text{TRD}}_{\pi}(s) \equiv v_{\pi}(s)
        \hspace{30pt} \forall s \in S
    \end{equation}
\end{figure*}

\begin{figure*}
    \begin{equation}
        L_{\text{TRD}} = \mathbb{E}_{(s_t, R_{t+i}, s_{t+w})\sim D} \begin{bmatrix}
            \left( v^{\text{TRD}_0}_\pi (s_t) - \sum_{i=0}^w R_{t+i} \right)^2 \\[4pt] 
            \left( v^{\text{TRD}_1}_\pi (s_t) - \gamma^w v^{\text{TRD}_0}_\pi (s_{t+w}) \right)^2 \\[4pt]
            \left( v^{\text{TRD}_2}_\pi (s_t) - \gamma^w v^{\text{TRD}_1}_\pi (s_{t+w}) \right)^2 \\
            \vdots \\
            \left( v^{\text{TRD}_{N}}_\pi (s_t) - \gamma^w v^{\text{TRD}_{N-1}}_\pi (s_{t+w}) \right)^2 \\[4pt]
            \left( v^{\text{TRD}_{N+1}}_\pi (s_t) - \gamma^w (v^{\text{TRD}_{N}}_\pi (s_{t+w}) + v^{\text{TRD}_{N+1}}_\pi (s_{t+w})) \right)^2
        \end{bmatrix}
        \label{eq:state-value-trd-loss}
    \end{equation}
\end{figure*}

For Listing \ref{listing:trd-training}, $N$ and $w$ are called \mintinline{python}{num_bins} and \mintinline{python}{reward_width} respectively. To implement the loss function, there are two core changes to a standard DQN-based loss function: selecting the \mintinline{python}{next_q_values} and determining the \mintinline{python}{q_targets}. For computing the \mintinline{python}{next_q_values} is \mintinline{python}{np.max(next_q_values, axis=-1)} however, due to the output including the future expected rewards, we first need to roll up the rewards to compute the scalar Q-value to know the optimal next actions, which can be used to collect the optimal future expected rewards for each observation. The second change is to compute the \mintinline{python}{q_targets} which is normally \mintinline[breaklines]{python}{(1 - terminations) * discount_factor * next_q_values}. As the number of future rewards is variable, we modify the multiple the discount factor by the reward width. Next, we \mintinline{python}{np.roll} the targets by $1$; this is equivalent to moving the first index to the second index, second to third, etc, and the last index to the first. Using the shifted predicted future rewards; we update the last element to include the new first element (previously the last element of the predicted rewards). Finally, we set the first element as the actual rewards collected by the agent. 

\begin{table}
    \centering
    \caption{Training hyperparameters for DQN Atari with TRD and QDagger heuristics}
    \label{tab:hyperparameters}
    \vspace{.5cm}
    \begin{tabular}{l|l}
        Parameter name & Value \\ \hline 
        Training seeds & 0, 1, 2 \\
        Number of online training timesteps & 4 million \\
        Size of the offline training replay buffer size & 1 million \\
        Number of offline training timesteps & 1 million \\
        Learning rate & 1e-4 \\ 
        Buffer size & 1 million \\
        Discount factor & 0.99 \\
        Target network update frequency & 1000 \\
        Batch size & 32 \\
        Epsilon for action selection & 0.01 \\
        Training frequency & 4 \\
        QDagger temperature & 1.0 \\
        Evaluation seeds & $0, 1, \dots, 9$ \\
        Offline training evaluation period & 100,000\\
        Online training evaluation period & 250,000 
    \end{tabular}
\end{table}

\section{More Examples of an Agent's Future Rewards}
Building upon Section 6.1, we provide more examples for the Atari \cite{bellemare2013arcade} Ms. Pacman, Breakout and Seaquest in Figures \ref{fig:mspacman-expected-reward}, \ref{fig:breakout-expected-reward-app} and \ref{fig:seaquest-expected-reward}. All agents in this section were trained with $N=40$ and $w=1$.

For Ms Pacman, we can observe from Figure \ref{fig:mspacman-expected-reward} that the agent expects high-frequency rewards of 1, around every three timesteps. Interestingly, the agent has high confidence in these future rewards (until $t+25$) as they are not spread out over several timesteps like Breakout or Seaquest in Figures \ref{fig:breakout-expected-reward-app} and \ref{fig:seaquest-expected-reward} respectively. At $t+25$ and beyond, the agent's confidence in a reward's particular timestep reduces, generally spread over two timesteps, e.g., $t+35$ and $t+36$.   

In comparison, Breakout and Seaquest (Figures \ref{fig:breakout-expected-reward} and \ref{fig:seaquest-expected-reward})show the agent's confidence in future rewards is significantly lower. For Breakout, the expected reward is spread over two timesteps with no subsequent expected reward within the next 30 timesteps. While, for Seaquest, the agent has very low confidence in the particular timestep of the reward, with the agent's belief centred around $t+5$. Both of these figures show that the agent's uncertainty about their future behaviour with future work required to understand if more training would enable the agent's greater confidence or if randomness within the environment prevents this.    

\begin{figure*}[t]
    \centering
    \includegraphics[width=\linewidth]{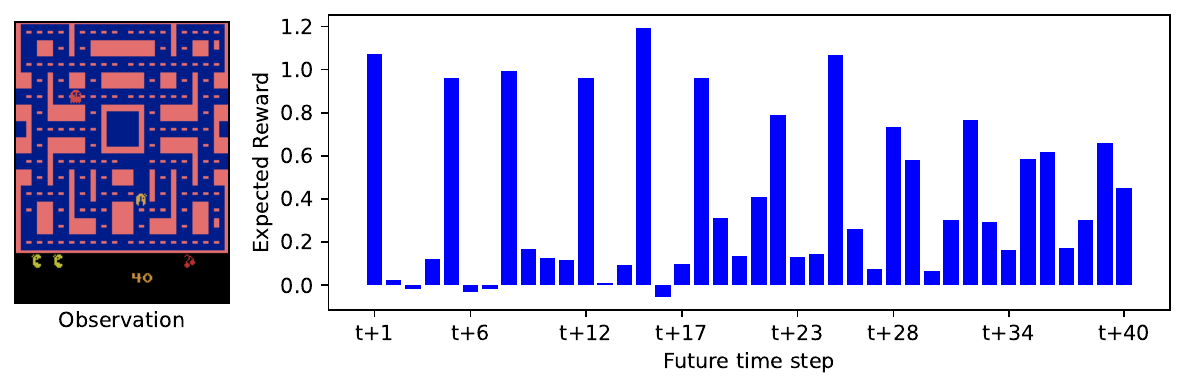}
    \caption{Expected Reward of a DQN-based TRD agent for Ms. Pacman with $N=40, w=1$.}
    \label{fig:mspacman-expected-reward}
    \vspace{.5cm}
\end{figure*}

\begin{figure*}
    \centering
    \includegraphics[width=\linewidth]{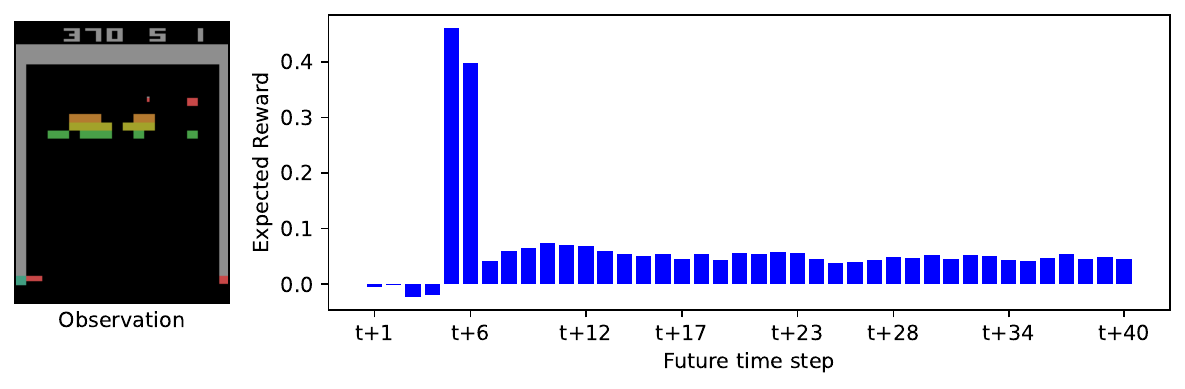}
    \caption{Expected Reward of a DQN-based TRD agent for Breakout with $N=40, w=1$.}
    \label{fig:breakout-expected-reward-app}
    \vspace{.5cm}
\end{figure*}

\begin{figure*}
    \centering
    \includegraphics[width=\linewidth]{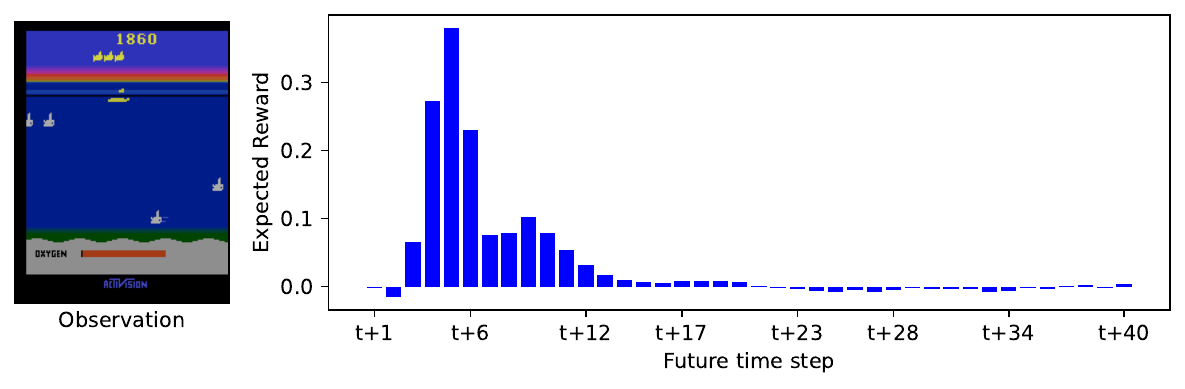}
    \caption{Expected Reward of a DQN-based TRD agent for Seaquest with $N=40, w=1$.}
    \label{fig:seaquest-expected-reward}
    \vspace{.5cm}
\end{figure*}

\section{More Visualisations of a Feature's Temporal Importance}
Building upon Section 6.2, in this Appendix, we provide more examples of feature importance for the Breakout environment at later stages in an episode than Figure 5 with Figures \ref{fig:breakout-feature-importance-middle} and \ref{fig:breakout-feature-importance-late}. These show that the agent's focus varies depending on the temporal distance to the predicted reward. 

With both figures, the agent shows greater relative importance to the bricks for $t+40$ than $t+1$, in particular, Figure \ref{fig:breakout-feature-importance-middle}. For Figure 5 and \ref{fig:breakout-feature-importance-middle}, we find that the ball has higher importance in $t+1$ than $t+40$; however, unexpectedly, in Figure \ref{fig:breakout-feature-importance-late}, this is reversed. We find that the agent's $t+40$ feature importance contains high importance of the ball while $t+1$ does not. The reason for this is unclear with future work required. 

\begin{figure*}
    \centering
    \includegraphics[width=\linewidth]{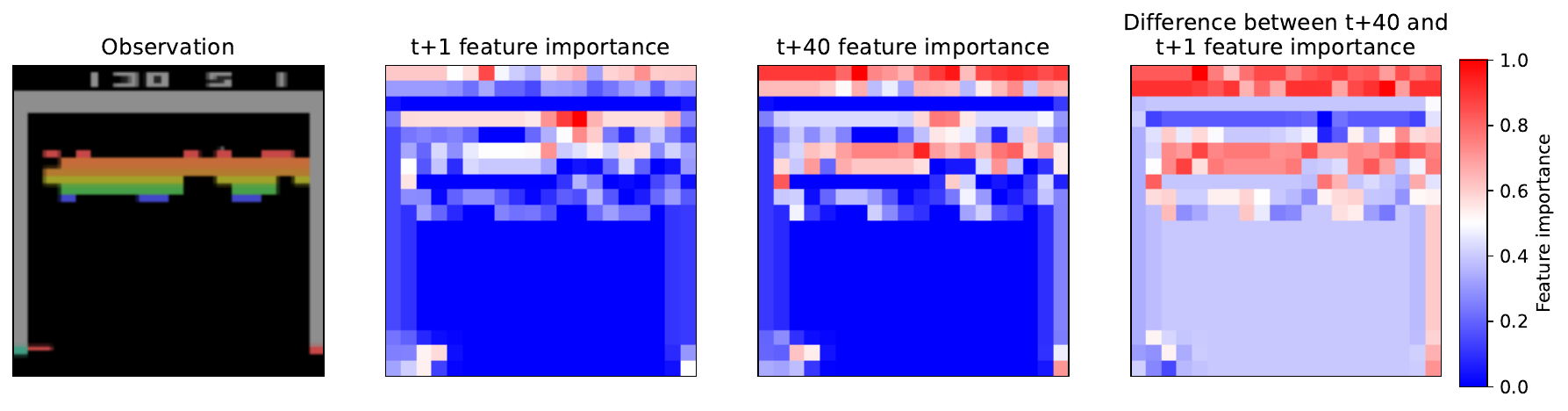}
    \caption{GradCAM saliency maps for the $t+1$ and $t+40$ expected reward along with their difference for a Breakout observation in the middle of an episode.}
    \label{fig:breakout-feature-importance-middle}
    \vspace{.5cm}
\end{figure*}

\begin{figure*}
    \centering
    \includegraphics[width=\linewidth]{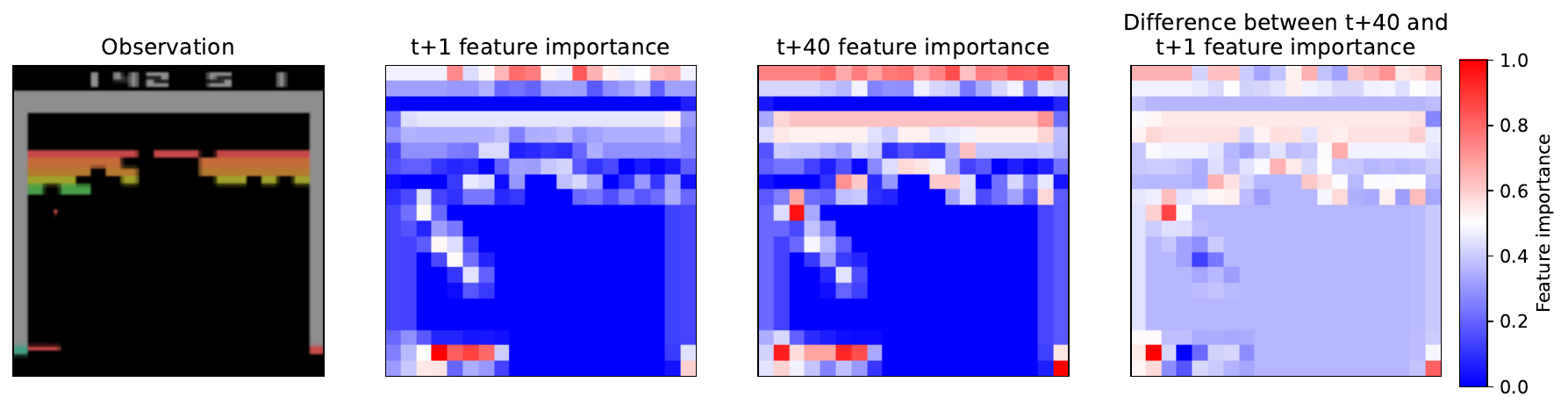}
    \caption{GradCAM saliency maps for the $t+1$ and $t+40$ expected reward along with their difference for a Breakout observation at the end of an episode.}
    \label{fig:breakout-feature-importance-late}
    \vspace{.5cm}
\end{figure*}

\section{More Contrastive Explanations of an Agent's Future Rewards}
Building upon Section 6.3, in this Appendix, we provide two more example contrastive explanations for the River raid and Ms Pacman environments in Figures \ref{fig:riverraid-contrastive} and \ref{fig:mspacman-contrastive}. These figures show the impact of an action on the agent's future expected rewards; in particular, they show a significant impact on the short-term rewards of the agents, which slowly converge to the same expected rewards. 

For Figure \ref{fig:riverraid-contrastive}, if the agent moves left, the difference in its future expected rewards compared to if it moved right have a four times increase in its expected reward, from $0.2$ to $0.8$. Similarly, for Figure \ref{fig:mspacman-contrastive}, the agent can move towards pellets (that reward the agent) with left or away, moving right. This is reflected in the agent's future beliefs in its rewards as the agent has significantly higher confidence only when moving left, whereas when moving right, the agent's expected rewards are limited to around $0.5$. 

\begin{figure*}
    \centering
    \includegraphics[width=\linewidth]{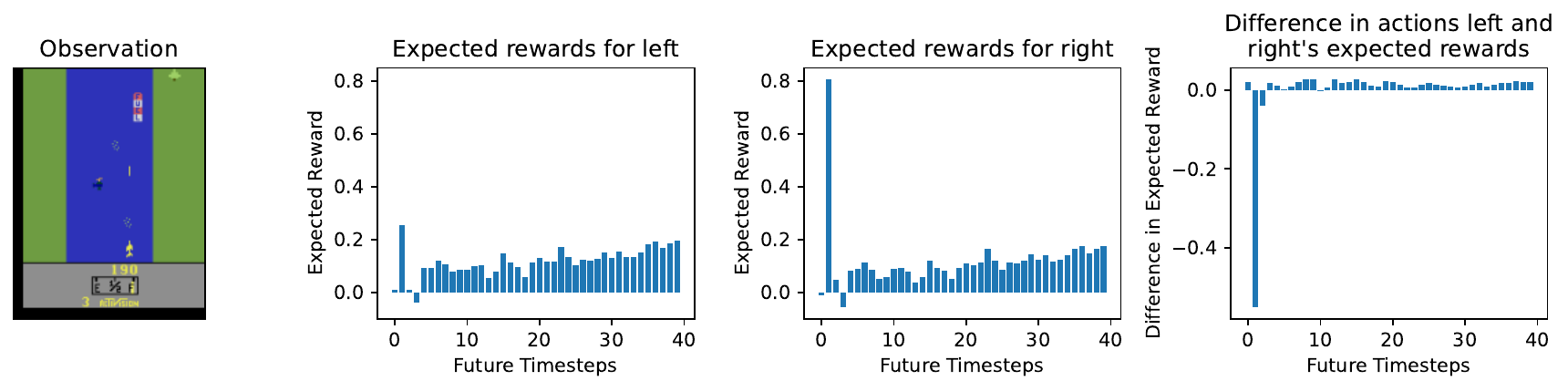}
    \caption{The difference of each future expected reward for taking Left and Right actions of the observation for the Atari River raid environment.}
    \label{fig:riverraid-contrastive}
    \vspace{.5cm}
\end{figure*}

\begin{figure*}
    \centering
    \includegraphics[width=\linewidth]{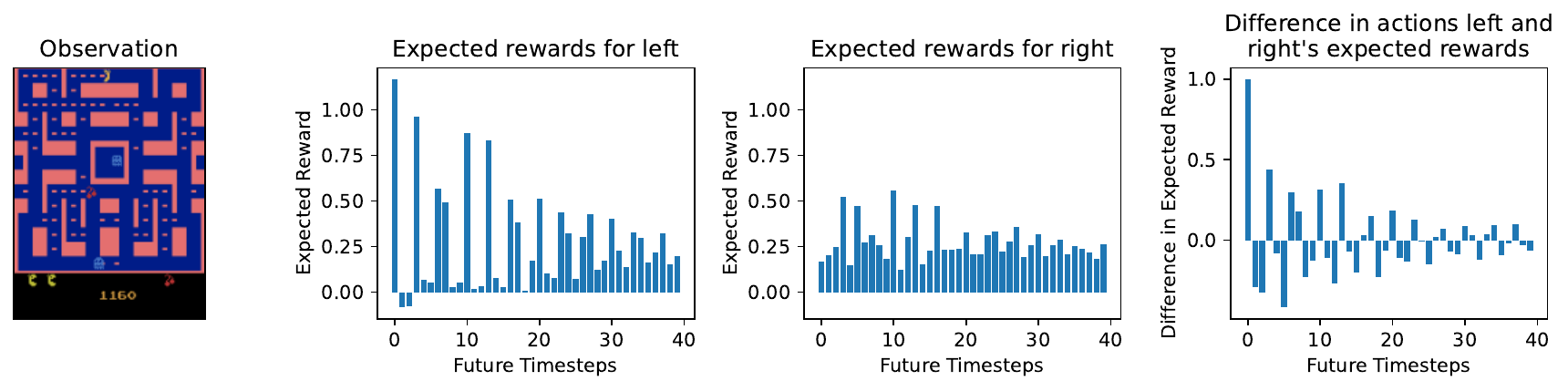}
    \caption{The difference of each future expected reward for taking Left and Right actions of the observation for the Atari Ms Pacman environment.}
    \label{fig:mspacman-contrastive}
    \vspace{.5cm}
\end{figure*}

\end{document}